\newcommand\ev[1]{\left \langle #1 \right \rangle}
\newcommand\br[1]{\left ( #1 \right )}
\newcommand\pbr[1]{\left \{ #1 \right \} }
\newcommand{\N}{\mathbb{N}}
\newcommand{\R}{\mathbb{R}}
\newcommand{\norm}[1]{\left\lVert#1\right\rVert}
\newcommand{\sqn}[1]{{\left\lVert#1\right\rVert}^2}
\newcommand{\abs}[1]{\left\lvert#1\right\rvert}
\newcommand{\eqdef}{\overset{\text{def}}{=}}
\newcommand{\avemm}{\frac{1}{M}\sum_{m=1}^M}
\newsavebox\myboxA
\newsavebox\myboxB
\newlength\mylenA
\definecolor{mydarkgreen}{RGB}{39,130,67}
\definecolor{mydarkred}{RGB}{192,47,25}
\newcommand*\overbar[2][0.75]{%
    \sbox{\myboxA}{$\m@th#2$}%
    \setbox\myboxB\null
    \ht\myboxB=\ht\myboxA%
    \dp\myboxB=\dp\myboxA%
    \wd\myboxB=#1\wd\myboxA
    \sbox\myboxB{$\m@th\overline{\copy\myboxB}$}
    \setlength\mylenA{\the\wd\myboxA}
    \addtolength\mylenA{-\the\wd\myboxB}%
    \ifdim\wd\myboxB<\wd\myboxA%
       \rlap{\hskip 0.5\mylenA\usebox\myboxB}{\usebox\myboxA}%
    \else
        \hskip -0.5\mylenA\rlap{\usebox\myboxA}{\hskip 0.5\mylenA\usebox\myboxB}%
    \fi}
\theoremstyle{definition}
\newtheorem{theorem}{Theorem}
\newtheorem{corollary}{Corollary}
\newtheorem{asm}{Assumption}
\newtheorem{lemma}{Lemma}
\theoremstyle{definition}
\begin{document}

\title{First Analysis of Local GD on Heterogeneous Data}
 \author{Ahmed Khaled\thanks{Work done during an internship at KAUST.} \\ Cairo University \\ \texttt{akregeb@gmail.com} \And Konstantin Mishchenko \\ KAUST\thanks{King Abdullah University of Science and Technology, Thuwal, Saudi Arabia} \\ \texttt{konstantin.mishchenko@kaust.edu.sa} \And Peter Richt\'arik \\ KAUST \\ \texttt{peter.richtarik@kaust.edu.sa}}
\maketitle

\begin{abstract} We provide the first convergence analysis of local gradient descent for minimizing the average of smooth and convex but otherwise arbitrary functions. Problems of this form  and local gradient descent as a solution method are of importance in federated learning, where each function is based on private data stored by a user on a mobile device, and the data of different users  can be arbitrarily heterogeneous. We show that in a low accuracy regime, the method has the same communication complexity as gradient descent.
\end{abstract}

\section{Introduction}
We are interested in solving the optimization problem
\begin{equation}
    \label{eq:opt-problem}
  \min_{x \in \R^d} \pbr{ f(x) \eqdef \frac{1}{M} \sum_{m=1}^{M} f_m (x) },
\end{equation}
which is arises in training of supervised machine learning models. We assume that each $f_m:\R^d\to \R$ is an $L$-smooth and convex function and we denote by $x_\ast$ a fixed minimizer of $f$. 

Our main interest is  in situations where each function is based on data available on a single device only, and where the data distribution across the devices can be arbitrarily heterogeneous.  This situation arises in {\em federated learning}, where machine learning models are trained on data available on consumer devices, such as mobile phones. In federated learning, transfer of  local data to a single data center for centralized training is prohibited due to  privacy reasons, and frequent communication is undesirable as it is expensive and intrusive.  Hence, several  recent works aim at constructing new ways of solving \eqref{eq:opt-problem} in a distributed fashion with as few communication rounds as possible.

Large-scale problems are often solved by first-order methods as they have proved to scale well with both dimension and data size. One attractive choice is {\em Local Gradient Descent}, which divides the optimization process into epochs. Each epoch starts by communication in the form of a model averaging step across all $M$ devices.\footnote{In the practice of federated learning, averaging is performed over a subset of devices only. Usually, only those updates are averaged which are received by a certain time window. Here we focus on an idealized scenario where averaging is done across all devices. We focus on this simpler situation first as even this is not currently understood theoretically.  } The rest of each epoch does not involve any communication, and is devoted to performing a fixed number of gradient descent steps initiated from the average model,  and based on the local functions, performed by all $M$ devices independently in parallel. See Algorithm~\ref{alg:local_gd} for more details. 

\begin{algorithm}[t]
   \caption{Local Gradient Descent}
   \label{alg:local_gd}
\begin{algorithmic}[1]
  \REQUIRE Stepsize $\gamma > 0$, synchronization/communication times $0=t_0\leq t_1 \leq t_2 \leq \dots$, initial vector $x_0\in \R^d$
  \STATE {\bf Initialize}  $x_0^m = x_0$ for all $m \in [M]\eqdef \{1,2,\dots,M\}$
   \FOR{$t=0,1,\dotsc$}
      \FOR{$m=1,\dotsc, M$}
         \STATE $x_{t+1}^m=
         \begin{cases}
         \frac{1}{M}\sum_{j=1}^M (x_t^j - \gamma \nabla f_j (x_t^j)), & \text{ if } t = t_p \text { for some } p \in \{1,2,\dots\} \\
         x_t^m - \gamma \nabla f_m (x_t^m), & \text{ otherwise. }
         \end{cases}$
      \ENDFOR
   \ENDFOR
\end{algorithmic}
\end{algorithm}

%

The stochastic version of this method is at the core of the {\em Federated Averaging} algorithm which has been used recently in federated learning applications, see e.g.\ \cite{McMahan17, Konecny16}. Essentially, Federated Averaging is a variant of local Stochastic Gradient Descent (SGD) with participating devices sampled randomly. This algorithm has been used in several machine learning applications such as mobile keyboard prediction~\cite{HardRao18}, and strategies for improving its communication efficiency were explored in~\cite{Konecny16}. Despite its empirical success, little is known about convergence properties of this method and it has been observed to diverge when too many local steps are performed~\cite{McMahan17}. This is not so surprising as the majority of common assumptions are not satisfied; in particular, the data is typically very non-i.i.d.~\cite{McMahan17}, so the local gradients can point in different directions. This property of the data can be written for any vector $x$ and indices $i,j$ as
\begin{align*}
	\|\nabla f_i(x) - \nabla f_j(x)\| \gg 1.
\end{align*}
Unfortunately, it is very hard to analyze local methods without assuming a bound on the dissimilarity of $\nabla f_i(x)$ and $\nabla f_j(x)$. For this reason, almost all prior work assumed bounded dissimilarity~\cite{yu2019linear, Li2019, Yu18, WangTuor18} and addressed other less challenging aspects of federated learning such as decentralized communication, nonconvexity of the objective or unbalanced data partitioning. In fact, a common way to make the analysis simple is to assume Lipschitzness of local functions,
$
	\|\nabla f_i(x)\|\le G
$
for any $x$ and $i$. We argue that this assumption is pathological and should be avoided when seeking a meaningful convergence bound. First of all, in unconstrained strongly convex minimization this assumption cannot be satisfied, making the analysis in works like~\cite{Stich2018} questionable. Second, there exists at least one method, whose convergence is guaranteed under bounded gradients~\cite{juditsky2011solving}, but in practice the method diverges~\cite{chavdarova2019reducing, mishchenko2019revisiting}. 

Finally, under the bounded gradients assumption we have
\begin{align}
    \label{eq:related-work-2}
    \norm{\nabla f_i (x) - \nabla f_{j} (x)} \leq \norm{\nabla f_i (x)} + \norm{\nabla f_j (x)} \leq 2G.
\end{align}
In other words, we lose control over the difference between the functions. Since $G$ bounds not just dissimilarity, but also the gradients themselves, it makes the statements less insightful or even vacuous. For instance, it is not going to be tight if the data is actually i.i.d.\ since $G$ in that case will remain a positive constant. In contrast, we will show that the rate should depend on a much more meaningful quantity, 
$$
	\sigma^2\eqdef \frac{1}{M}\sum_{m=1}^M \|\nabla f_m (x_*)\|^2,
$$
where $x_*$ is a minimizer of $f$. Obviously, $\sigma$ is always finite and it serves as a natural measure of variance in local methods. On top of that, it allows us to obtain bounds that are tight in case the data is actually i.i.d. We note that an attempt to get more general convergence statement has been made in~\cite{Sahu18}, but sadly their guarantee is strictly worse than that of minibatch Stochastic Gradient Descent (SGD), making their theoretical contribution smaller.

We additionally note that the bound in the mentioned work~\cite{Li2019} not only uses bounded gradients, but also provides a pessimistic ${\cal O} (H^2/T)$ rate, where $H$ is the number of local steps in each epoch, and $T$ is the total number of steps of the method. Indeed, this requires $H$ to be ${\cal O} (1)$ to make the rate coincide with that of SGD for strongly convex functions. The main contribution of that work, therefore, is in considering partial participation as in Federated Averaging.

When the data is identically distributed and stochastic gradients are used instead of full gradients on each node, the resulting method has been explored extensively in the literature under different names, see e.g.\ \cite{Stich2018, Basu2019, Wang18, Zhou18}. \cite{mishchenko2018delay} proposed an asynchronous local method that converges to the exact solution without decreasing stepsizes, but its benefit from increasing $H$ is limited by constant factors. \cite{Mangasarian95} seems to be the first work to propose a local method, but no rate was shown in that work.

\section{Convergence of Local GD}
\subsection{Assumptions and notation}
Before introducing our main result, let us first formulate explicitly our assumptions. 
\begin{asm}
    \label{asm:smoothness-and-convexity}
    The set of minimizers of \eqref{eq:opt-problem} is nonempty.
Further, for every $m\in [M]\eqdef \{1,2,\dots,M\}$, $f_m$ is convex and $L$-smooth. That is, for all $x, y\in \R^d$ the following inequalities are satisfied:
$$
	    0\le f_m (x) - f_m (y) - \ev{\nabla f_m (y), x - y} \leq \tfrac{L}{2}\|x-y\|^2.
$$
\end{asm}
 Further, we assume that Algorithm~\ref{alg:local_gd} is run with a bounded  synchronization interval. That is, we assume that $$H \eqdef \max_{p\geq 0} \abs{t_p - t_{p+1}}$$ is finite.  Given local vectors $x_t^1, x_t^2, \ldots, x_t^M \in \R^d$, we define the average iterate, iterate variance  and average gradient at time $t$ as
\begin{align}
    \hat{x}_t \eqdef \avemm x_t^m &&  
    V_t \eqdef \avemm \sqn{x_t^m - \hat{x}_t} &&  
    g_t \eqdef \frac{1}{M} \sum_{m=1}^{M} \nabla f_m (x_t^m),
\end{align}
respectively. The Bregman divergence with respect to $f$ is defined via
$$
    D_{f} (x, y) \eqdef f(x) - f(y) - \ev{\nabla f(y), x - y}.$$ 
Note that in the case $y = x_\ast$, we have
$
    D_{f} (x, x_\ast) = f(x) - f(x_\ast).
$


\subsection{Analysis}

The first lemma enables us to find a recursion on the optimality gap for a single step of local GD:
\begin{lemma}
    \label{lemma:optimality-gap-recursion}
    Under Assumption~\ref{asm:smoothness-and-convexity} and for any $\gamma \geq 0$ we have
    \begin{equation}
        \label{eq:9f8gff}
        \sqn{r_{t+1}} \leq  \sqn{r_t} + \gamma L \br{1 + 2 \gamma L} V_t - 2 \gamma \br{1 - 2 \gamma L} D_{f} (\hat{x}_t, x_\ast),
    \end{equation}   
    where $r_{t} \eqdef \hat{x}_t - x_\ast$. In particular, if $\gamma \leq \frac{1}{4L}$, then
$
        \sqn{r_{t+1}} \leq \sqn{r_t} + \tfrac{3}{2} \gamma L V_t - \gamma D_{f} (\hat{x}_t, x_\ast).
$
\end{lemma}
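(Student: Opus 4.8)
The plan is to track the one-step evolution of $\sqn{r_t}$. Averaging the local gradient steps $x_{t+1}^m = x_t^m - \gamma \nabla f_m(x_t^m)$ over the $M$ devices gives $\hat{x}_{t+1} = \hat{x}_t - \gamma g_t$, hence $r_{t+1} = r_t - \gamma g_t$ and
\begin{equation*}
\sqn{r_{t+1}} = \sqn{r_t} - 2\gamma \ev{g_t, r_t} + \gamma^2 \sqn{g_t}.
\end{equation*}
Everything then reduces to lower bounding the cross term $\ev{g_t, r_t}$ and upper bounding $\sqn{g_t}$, in each case producing only the quantities $V_t$ and $D_f(\hat{x}_t, x_\ast)$, with no bounded-gradient constant.

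For the cross term, I would write $\hat{x}_t - x_\ast = (\hat{x}_t - x_t^m) + (x_t^m - x_\ast)$ inside $\ev{g_t, r_t} = \avemm \ev{\nabla f_m(x_t^m), \hat{x}_t - x_\ast}$. Convexity of $f_m$ handles the second piece, $\ev{\nabla f_m(x_t^m), x_t^m - x_\ast} \ge f_m(x_t^m) - f_m(x_\ast)$, while the quadratic upper bound from $L$-smoothness of $f_m$ applied to the pair $(\hat{x}_t, x_t^m)$ handles the first, $\ev{\nabla f_m(x_t^m), \hat{x}_t - x_t^m} \ge f_m(\hat{x}_t) - f_m(x_t^m) - \tfrac{L}{2}\sqn{x_t^m - \hat{x}_t}$. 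Adding these the $\pm f_m(x_t^m)$ cancel, and averaging over $m$ gives $\ev{g_t, r_t} \ge D_f(\hat{x}_t, x_\ast) - \tfrac{L}{2} V_t$.

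For $\sqn{g_t}$, the key idea — and the reason no Lipschitz assumption is needed — is to center the local gradients at $\nabla f(\hat{x}_t)$ rather than at $0$. Writing $g_t = \nabla f(\hat{x}_t) + \avemm\big(\nabla f_m(x_t^m) - \nabla f_m(\hat{x}_t)\big)$ and using $\sqn{a+b} \le 2\sqn{a} + 2\sqn{b}$ together with convexity of $\norm{\cdot}^2$, I get $\sqn{g_t} \le 2\sqn{\nabla f(\hat{x}_t)} + 2\avemm \sqn{\nabla f_m(x_t^m) - \nabla f_m(\hat{x}_t)}$; the second sum is at most $2L^2 V_t$ by $L$-smoothness of each $f_m$ and the definition of $V_t$. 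For the first I would invoke the standard self-bounding inequality for $L$-smooth convex functions, $\sqn{\nabla f(\hat{x}_t)} = \sqn{\nabla f(\hat{x}_t) - \nabla f(x_\ast)} \le 2L\big(f(\hat{x}_t) - f(x_\ast)\big) = 2L\, D_f(\hat{x}_t, x_\ast)$, which holds because $\nabla f(x_\ast) = 0$. Hence $\sqn{g_t} \le 4L\, D_f(\hat{x}_t, x_\ast) + 2L^2 V_t$.

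Substituting both bounds into the expansion and collecting the coefficients of $V_t$ and of $D_f(\hat{x}_t, x_\ast)$ yields exactly \eqref{eq:9f8gff}; the particular case follows since $\gamma \le \tfrac{1}{4L}$ gives $1 + 2\gamma L \le \tfrac{3}{2}$ and $1 - 2\gamma L \ge \tfrac{1}{2}$. I expect the only real obstacle to be the handling of $\sqn{g_t}$: a naive triangle inequality on $\norm{\nabla f_m(x_t^m)}$ would drag a bounded-gradient (or bounded-dissimilarity) constant into the bound, so the decomposition around $\nabla f(\hat{x}_t)$ plus the self-bounding lemma is precisely the device that keeps the right-hand side expressed through $V_t$ and the optimality gap alone. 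The rest is convexity, $L$-smoothness, and bookkeeping.
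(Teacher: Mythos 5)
Your proof is correct and follows essentially the same route as the paper: the expansion of $\sqn{r_t - \gamma g_t}$, the bound $\ev{g_t, r_t} \ge D_f(\hat{x}_t,x_\ast) - \tfrac{L}{2}V_t$ via the split $\hat{x}_t - x_\ast = (\hat{x}_t - x_t^m) + (x_t^m - x_\ast)$, and the bound $\sqn{g_t} \le 2L^2 V_t + 4L\,D_f(\hat{x}_t,x_\ast)$ via centering at $\nabla f(\hat{x}_t)$ are exactly the paper's Lemmas~\ref{lemma:inner-product-bound} and~\ref{lemma:average-gradient-bound}. The only difference is presentational: the paper factors these two estimates out as standalone lemmas, while you prove them inline.
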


We now bound  the sum of the variances $V_t$ \textit{over an epoch}. An epoch-based bound is intuitively what we want  since we are only interested in the points $\hat{x}_{t_p}$ produced at the end of each epoch.

\begin{lemma}
    \label{lemma:Vt-bound}
    Suppose that Assumption~\ref{asm:smoothness-and-convexity} holds and let $p \in \N$, define $v = t_{p+1} - 1$ and suppose Algorithm~\ref{alg:local_gd} is run with a synchronization interval $H\geq 1$ and a constant stepsize $\gamma > 0$ such that $\gamma \leq \frac{1}{4 L H}$. Then the following inequalities hold:
    \begin{align*}
  &\sum_{t=t_p}^{v} V_t \leq 5 L \gamma^2 H^2 \sum_{i=t_p}^{v} D_{f} (\hat{x}_i, x_\ast) + \sum_{i=t_p}^{v} 8 \gamma^2 H^2 \sigma^2,\\
      &\sum_{t=t_p}^{v} \frac{3}{2} L V_t - D_{f} (\hat{x}_t, x_\ast)       \leq  -\frac{1}{2} \sum_{t=t_p}^{v} D_{f} (\hat{x}_i, x_\ast) + \sum_{t=t_p}^{v} 12 L \gamma^2 H^2 \sigma^2.
    \end{align*}
\end{lemma}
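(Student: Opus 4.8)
The plan is to exploit that each epoch begins with a synchronization step, so that $x_{t_p}^1 = x_{t_p}^2 = \dots = x_{t_p}^M = \hat{x}_{t_p}$ and in particular $V_{t_p} = 0$. Inside the epoch there is no communication, so $\hat{x}_{t+1} = \hat{x}_t - \gamma g_t$ and $x_{t+1}^m = x_t^m - \gamma \nabla f_m(x_t^m)$, and unrolling these recursions from the common starting point $\hat{x}_{t_p}$ gives, for every $t$ with $t_p \le t \le t_{p+1}$, the identity $x_t^m - \hat{x}_t = -\gamma \sum_{j=t_p}^{t-1} \br{\nabla f_m(x_j^m) - g_j}$. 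Squaring, averaging over $m$, and applying $\sqn{\sum_{j} a_j} \le n \sum_j \sqn{a_j}$ to the $n = t - t_p \le H - 1$ summands yields $V_t \le \gamma^2 (t - t_p) \sum_{j=t_p}^{t-1} \avemm \sqn{\nabla f_m(x_j^m) - g_j}$. Since $g_j$ is precisely the mean of the $\nabla f_m(x_j^m)$ over $m$, this inner average is a variance, hence at most the second moment $\avemm \sqn{\nabla f_m(x_j^m)}$.

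Next I would rewrite $\avemm \sqn{\nabla f_m(x_j^m)}$ in terms of the quantities appearing in the statement. Inserting first $\nabla f_m(\hat{x}_j)$ and then $\nabla f_m(x_\ast)$, and applying $\sqn{a + b} \le 2\sqn{a} + 2\sqn{b}$ twice, one obtains $\sqn{\nabla f_m(x_j^m)} \le 4\sqn{\nabla f_m(x_j^m) - \nabla f_m(\hat{x}_j)} + 4 \sqn{\nabla f_m(\hat{x}_j) - \nabla f_m(x_\ast)} + 2\sqn{\nabla f_m(x_\ast)}$. Now $L$-smoothness bounds the first term by $L^2 \sqn{x_j^m - \hat{x}_j}$, and the standard inequality $\sqn{\nabla f_m(x) - \nabla f_m(y)} \le 2 L D_{f_m}(x, y)$, which follows from Assumption~\ref{asm:smoothness-and-convexity}, bounds the second by $2 L D_{f_m}(\hat{x}_j, x_\ast)$. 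Averaging over $m$ and using $\avemm \nabla f_m(x_\ast) = \nabla f(x_\ast) = 0$ — which makes the linear parts of the per-node Bregman divergences cancel, so that $\avemm D_{f_m}(\hat{x}_j, x_\ast) = D_f(\hat{x}_j, x_\ast)$ — together with the definition of $\sigma^2$, I get $\avemm \sqn{\nabla f_m(x_j^m)} \le 4 L^2 V_j + 8 L D_f(\hat{x}_j, x_\ast) + 2\sigma^2$, and hence $V_t \le \gamma^2 (t - t_p) \sum_{j=t_p}^{t-1} \br{ 4 L^2 V_j + 8 L D_f(\hat{x}_j, x_\ast) + 2\sigma^2 }$.

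Now I would sum over $t = t_p, \dots, v$. Since every term in the bracket is nonnegative, each inner sum may be extended to run up to $j = v$ and pulled out, leaving the coefficient $\sum_{t=t_p}^{v}(t - t_p) \le \tfrac{1}{2} H^2$ — it is essential here to keep the factor $\tfrac12$ rather than the cruder bound $H^2$. Writing $S_V = \sum_{t=t_p}^{v} V_t$ and $S_D = \sum_{t=t_p}^{v} D_f(\hat{x}_t, x_\ast)$, this produces the self-bounding inequality $S_V \le 2 \gamma^2 H^2 L^2 S_V + 4 \gamma^2 H^2 L\, S_D + \gamma^2 H^2 \sum_{t=t_p}^{v} \sigma^2$. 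The hypothesis $\gamma \le \tfrac{1}{4LH}$ forces $2\gamma^2 H^2 L^2 \le \tfrac{1}{8}$, so the $S_V$ term on the right can be absorbed into the left; dividing by $\tfrac{7}{8}$ gives $S_V \le \tfrac{32}{7} \gamma^2 H^2 L\, S_D + \tfrac{8}{7} \gamma^2 H^2 \sum_{t=t_p}^{v} \sigma^2 \le 5 \gamma^2 H^2 L\, S_D + 8 \gamma^2 H^2 \sum_{t=t_p}^{v}\sigma^2$, which is the first claimed bound.

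The second inequality then follows by substitution: plugging the first bound into $\tfrac{3}{2} L S_V$ gives $\tfrac{15}{2} L^2 \gamma^2 H^2 S_D + 12 L \gamma^2 H^2 \sum_{t=t_p}^{v} \sigma^2$, and $\gamma \le \tfrac{1}{4LH}$ makes $\tfrac{15}{2} L^2 \gamma^2 H^2 \le \tfrac{1}{2}$, so $\tfrac{3}{2} L S_V - S_D \le -\tfrac{1}{2} S_D + 12 L \gamma^2 H^2 \sum_{t=t_p}^{v} \sigma^2$, exactly as stated (this particular combination is clearly engineered to be fed into Lemma~\ref{lemma:optimality-gap-recursion}). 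I expect the only genuinely delicate point to be the bookkeeping of constants: one must retain the factor $\tfrac12$ from $\sum_t (t - t_p)$, and check that the constants compounding through the two uses of $\sqn{a+b} \le 2\sqn{a}+2\sqn{b}$ still leave the condition $\gamma \le \tfrac{1}{4LH}$ enough slack both to close the recursion for $S_V$ and to absorb the $S_D$ coefficient in the second part. Beyond this there is no conceptual obstacle once the unrolling and the variance-versus-second-moment step are in place.
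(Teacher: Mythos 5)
Your proof is correct and follows essentially the same route as the paper: unroll the within-epoch recursion, bound the deviation by the second moment of the local gradients, split that second moment into a $V$-term, a Bregman-divergence term, and $\sigma^2$, and close the resulting self-bounding inequality using $\gamma \leq \tfrac{1}{4LH}$. The only difference is in constant bookkeeping — you retain the factor $\tfrac12$ from $\sum_t (t-t_p) \leq \tfrac{H^2}{2}$ and use two crude $\sqn{a+b}\leq 2\sqn{a}+2\sqn{b}$ splits, whereas the paper bounds the triangular sum by $H^2$ but compensates with a tuned $(1+\alpha)$, $(1+\beta)$ splitting; both land within the stated constants.
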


Combining the previous two lemmas,  the convergence of local GD is established in the next theorem:
\begin{theorem}
    \label{theorem:local-gd-weak-convexity}
    For local GD run with a constant stepsize $\gamma > 0$ such that $\gamma \leq \frac{1}{4 L H}$ and under Assumption~\ref{asm:smoothness-and-convexity}, we have
    \begin{align}
        \label{eq:thm-cnvx-local-gd}
        f(\bar{x}_T) - f(x_\ast) &\leq \frac{2 \sqn{x_0 - x_\ast}}{\gamma T} + 24 \gamma^2 \sigma^2 H^2 L,
    \end{align}
    where $\bar{x}_T \eqdef \frac{1}{T} \sum_{t=0}^{T-1} \hat{x}_{t}$.
\end{theorem}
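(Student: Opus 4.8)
The plan is to combine Lemma~\ref{lemma:optimality-gap-recursion} and Lemma~\ref{lemma:Vt-bound} by telescoping over epochs, and then pass from the averaged Bregman divergences to $f(\bar{x}_T) - f(x_\ast)$ via Jensen's inequality. Since the synchronization interval satisfies $H \geq 1$, the hypothesis $\gamma \leq \frac{1}{4LH}$ implies $\gamma \leq \frac{1}{4L}$, so the simplified (``in particular'') bound of Lemma~\ref{lemma:optimality-gap-recursion} applies at every step: writing $r_t \eqdef \hat{x}_t - x_\ast$,
$$\sqn{r_{t+1}} \leq \sqn{r_t} + \gamma \br{\tfrac{3}{2} L V_t - D_{f}(\hat{x}_t, x_\ast)}.$$
Summing over $t = 0, 1, \dots, T-1$ — assuming, as is standard in this idealized setting, that $T$ is a synchronization time, say $T = t_P$ — telescopes the left-hand side and gives
$$\sqn{r_T} \leq \sqn{r_0} + \gamma \sum_{t=0}^{T-1} \br{\tfrac{3}{2} L V_t - D_{f}(\hat{x}_t, x_\ast)}.$$

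Next I would bound the remaining sum using the second inequality of Lemma~\ref{lemma:Vt-bound}. The epochs $[t_p, t_{p+1}-1]$ for $p = 0, \dots, P-1$ partition $\{0, \dots, T-1\}$, so applying that per-epoch inequality and adding up over $p$ makes all the $D_f$ terms and all the $\sigma^2$ terms combine cleanly over exactly $T$ indices:
$$\sum_{t=0}^{T-1} \br{\tfrac{3}{2} L V_t - D_{f}(\hat{x}_t, x_\ast)} \leq -\frac{1}{2} \sum_{t=0}^{T-1} D_{f}(\hat{x}_t, x_\ast) + 12 L \gamma^2 H^2 \sigma^2 T.$$
Substituting this into the telescoped inequality, discarding the nonnegative term $\sqn{r_T}$ on the left, and rearranging yields
$$\frac{\gamma}{2} \sum_{t=0}^{T-1} D_{f}(\hat{x}_t, x_\ast) \leq \sqn{r_0} + 12 L \gamma^3 H^2 \sigma^2 T,$$
and hence, dividing by $\tfrac{\gamma T}{2}$, $\ \frac{1}{T}\sum_{t=0}^{T-1} D_{f}(\hat{x}_t, x_\ast) \leq \frac{2\sqn{r_0}}{\gamma T} + 24 L \gamma^2 H^2 \sigma^2$.

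Finally I would invoke convexity of $f$ (which follows from convexity of each $f_m$): Jensen's inequality applied to $\bar{x}_T = \frac{1}{T}\sum_{t=0}^{T-1}\hat{x}_t$ gives $f(\bar{x}_T) - f(x_\ast) \leq \frac{1}{T}\sum_{t=0}^{T-1}\br{f(\hat{x}_t) - f(x_\ast)}$, and since $D_f(\hat{x}_t, x_\ast) = f(\hat{x}_t) - f(x_\ast)$ the right-hand side is exactly the average divergence bounded above. Combining with the previous display and using $r_0 = \hat{x}_0 - x_\ast = x_0 - x_\ast$ (all $M$ devices are initialized at $x_0$) gives precisely \eqref{eq:thm-cnvx-local-gd}.

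Each individual step here is routine once the two lemmas are available; the only mild care needed is the bookkeeping in the epoch-wise telescoping — checking that the per-epoch bounds of Lemma~\ref{lemma:Vt-bound} sum to a clean bound over the whole horizon, and noting that the (harmless) assumption ``$T$ ends an epoch'' is what makes the sums line up. The substantive work is entirely inside the two lemmas, above all in the variance estimate of Lemma~\ref{lemma:Vt-bound}: it is there that one must unroll the local gradient steps within an epoch, use $V_{t_p} = 0$ at each synchronization point together with $L$-smoothness, and extract the $\gamma^2 H^2$ dependence and the $\sigma^2$ term that ultimately appear in the theorem.
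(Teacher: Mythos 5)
Your proposal is correct and follows essentially the same route as the paper: apply the per-step recursion of Lemma~\ref{lemma:optimality-gap-recursion}, telescope over $t$ with $T$ taken to be a synchronization time, bound the accumulated $\tfrac{3}{2}LV_t - D_f$ terms epoch-by-epoch via the second inequality of Lemma~\ref{lemma:Vt-bound}, and finish with Jensen's inequality. If anything, your version is slightly cleaner than the paper's, which first weakens $\tfrac{3}{2}LV_t$ to $2LV_t$ before invoking a lemma stated for the $\tfrac{3}{2}L$ coefficient.
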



\subsection{Local GD vs GD} 

In order to interpret the above bound,  we may ask: how many communication rounds are sufficient to guarantee $ f(\bar{x}_T) - f(x_\ast) \leq \epsilon$? To answer this question, we need to minimize $\frac{T}{H}$ subject to the constraints $0<\gamma \leq \frac{1}{4L}$, $\frac{2 \|x_0-x_\ast\|^2}{\gamma T} \leq \frac{\epsilon}{2} $, and $24 \gamma^2 \sigma^2 H^2 L \leq  \frac{\epsilon}{2}$, in variables $T, H$ and $\gamma$. We can easily deduce from the constraints that \begin{equation}\label{eq:coparison_GD}\frac{T}{H}\geq \frac{16  \|x_0-x_\ast\|^2}{\epsilon}\max \left\{L, \sigma \sqrt{\frac{3  L}{\epsilon}}\right\}.\end{equation} On the other hand, this lower bound is achieved by {\em any} $0<\gamma\leq \frac{1}{4L}$ as long as we pick $$T = T(\gamma)\eqdef \frac{4  \|x_0-x_\ast\|^2}{\epsilon \gamma}\qquad \text{and} \qquad H = H(\gamma)\eqdef \frac{1}{4\max \left\{L, \sigma \sqrt{\frac{3  L}{\epsilon}}\right\} \gamma}.$$  The smallest $H$ achieving this lower bound is $H(\frac{1}{4L}) = \min\left\{1,\sqrt{\frac{\epsilon L }{3 \sigma^2}}\right\}$. 

Further, notice that as long as 
the target accuracy is not too high, in particular $\epsilon \geq \frac{3\sigma^2}{L}$, then $ \max \left\{L, \sigma \sqrt{3  L/\epsilon}\right\}=L$ and \eqref{eq:coparison_GD} says that the number of communications of local GD (with parameters set as $H=H(\gamma)$ and $T=T(\gamma)$) is equal to $$\frac{T}{H} = {\cal O} \left(  \frac{L \|x_0-x_\ast\|^2}{\epsilon} \right),$$ {\em which is the same as the number of iterations (i.e., communications) of gradient descent.} 
If $\epsilon < \frac{3\sigma^2}{L}$, then \eqref{eq:coparison_GD} gives the communication complexity $$\frac{T}{H} = {\cal O}\left( \frac{ \sqrt{L} \sigma}{ \epsilon^{3/2}}\right).$$


\subsection{Local GD vs Minibatch SGD} \label{sec:mSGD}

Equation~\eqref{eq:thm-cnvx-local-gd} shows a clear analogy between the convergence of local GD and the convergence rate of minibatch SGD, establishing a $1/T$ convergence to a neighborhood depending on the expected noise at the optimum $\sigma^2$, which measures how dissimilar the functions $f_m$ are from each other at the optimum $x_\ast$. 

The analogy between SGD and local GD extends further to the convergence rate, as the next corollary shows:

\begin{corollary}
    Choose $H$ such that $H \leq \frac{\sqrt{T}}{\sqrt{M}}$, then $\gamma = \frac{\sqrt{M}}{4 L \sqrt{T}} \leq \frac{1}{4 H L}$, and hence applying the result of the previous theorem
    \begin{align*}
        f(\bar{x}_T) - f(x_\ast) &\leq \frac{8 L \sqn{x_0 - x_\ast}}{\sqrt{M T}} + \frac{3 M \sigma^2 H^2}{2 L T}.
    \end{align*}
    To get a convergence rate of $1/\sqrt{MT}$ we can choose $H = O\br{T^{1/4} M^{-3/4}}$, which implies a total number of $\Omega(T^{3/4} M^{3/4})$ communication steps. If a rate of $1/\sqrt{T}$ is desired instead, we can choose a larger $H = O\br{T^{1/4}}$.
\end{corollary}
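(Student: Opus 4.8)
The plan is to read the corollary as nothing more than a specialization of Theorem~\ref{theorem:local-gd-weak-convexity}: I would fix the stepsize to the proposed value $\gamma = \frac{\sqrt{M}}{4L\sqrt{T}}$, check that this $\gamma$ is admissible, substitute it into the bound \eqref{eq:thm-cnvx-local-gd}, and finally pick $H$ so as to balance the two resulting error terms against the target rate.

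First I would verify admissibility. Theorem~\ref{theorem:local-gd-weak-convexity} requires $\gamma \le \frac{1}{4LH}$; for $\gamma = \frac{\sqrt{M}}{4L\sqrt{T}}$ this is equivalent to $H\sqrt{M} \le \sqrt{T}$, i.e.\ exactly the standing hypothesis $H \le \sqrt{T}/\sqrt{M}$, so the theorem applies verbatim. Then I would plug $\gamma = \frac{\sqrt{M}}{4L\sqrt{T}}$ into the two terms of \eqref{eq:thm-cnvx-local-gd}: the optimization term $\frac{2\sqn{x_0-x_\ast}}{\gamma T}$ becomes $\frac{8L\sqn{x_0-x_\ast}}{\sqrt{MT}}$, and the variance term $24\gamma^2\sigma^2H^2L$ becomes $24\cdot\frac{M}{16L^2T}\cdot\sigma^2H^2L = \frac{3M\sigma^2H^2}{2LT}$, which is the displayed inequality. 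This step is routine arithmetic with no hidden subtlety.

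For the asymptotic statements I would simply balance the $\Theta(1/\sqrt{MT})$ optimization term against the $\Theta(MH^2/T)$ variance term. Requiring $MH^2/T \lesssim 1/\sqrt{MT}$ gives $H^2 \lesssim T^{1/2}/M^{3/2}$, hence $H = O\br{T^{1/4}M^{-3/4}}$, and then the number of communication rounds $T/H$ is $\Omega\br{T^{3/4}M^{3/4}}$; moreover $T^{1/4}M^{-3/4} \le \sqrt{T}/\sqrt{M}$ always holds (it reduces to $1 \le (MT)^{1/4}$), so admissibility is preserved. If one only asks for the weaker rate $1/\sqrt{T}$, the variance term need only be $\lesssim 1/\sqrt{T}$, which permits the larger choice $H = O\br{T^{1/4}}$ up to the $M$-dependent factor (here admissibility needs $M = O(\sqrt{T})$), and correspondingly fewer communications. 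I do not expect any real obstacle: all the analytic work sits in Theorem~\ref{theorem:local-gd-weak-convexity}, and the corollary is a one-line substitution followed by an elementary term-balancing argument; the only thing to keep track of is that each chosen $H$ still satisfies $H \le \sqrt{T}/\sqrt{M}$ so that the stepsize bound remains valid.
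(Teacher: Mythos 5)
Your proposal is correct and follows exactly the route the paper intends: the corollary is proved by checking that $H \le \sqrt{T}/\sqrt{M}$ makes $\gamma = \frac{\sqrt{M}}{4L\sqrt{T}}$ admissible, substituting into \eqref{eq:thm-cnvx-local-gd}, and balancing the two terms; your arithmetic and the resulting choices of $H$ all match. Your added observation that the larger choice $H = O(T^{1/4})$ is only admissible when $M = O(\sqrt{T})$ is a fair point of care that the paper leaves implicit.
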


\section{Experiments}
To verify the theory, we run our experiments on logistic regression with $\ell_2$ regularization and datasets taken from the LIBSVM library~\cite{chang2011libsvm}. We use a machine with 24 Intel(R) Xeon(R) Gold 6146 CPU @ 3.20GHz cores and we handle communication via the MPI for Python package~\cite{dalcin2011parallel}. 

Since our architecture leads to a very specific trade-off between computation and communication, we also provide plots for the case the communication time relative to gradient computation time is higher or lower. In all experiments, we use full gradients $\nabla f_m$ and constant stepsize $\frac{1}{L}$. The amount of $\ell_2$ regularization was chosen of order $\frac{1}{n}$, where $n$ is the total amount of data. The data partitioning is not i.i.d. and is done based on the index in the original dataset.

We observe a very tight match between our theory and numerical results. In cases where communication is significantly more expensive than gradient computation, local methods are much faster for imprecise convergence. This was not a big advantage though with our architecture, mainly because full gradients took a lot of time to be computed.

\begin{figure}[!th]
	\centering
	\includegraphics[scale=0.2]{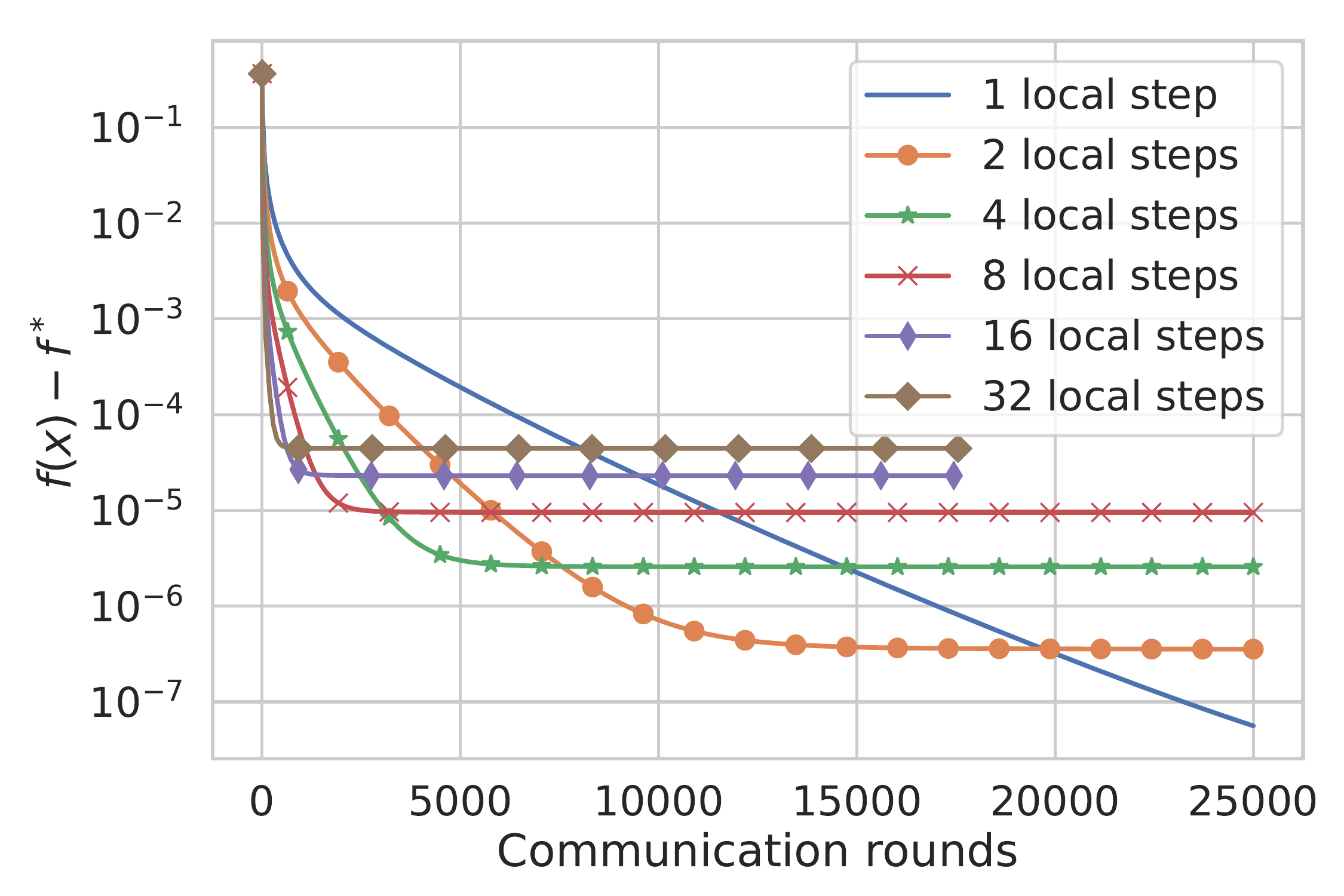}
	\includegraphics[scale=0.2]{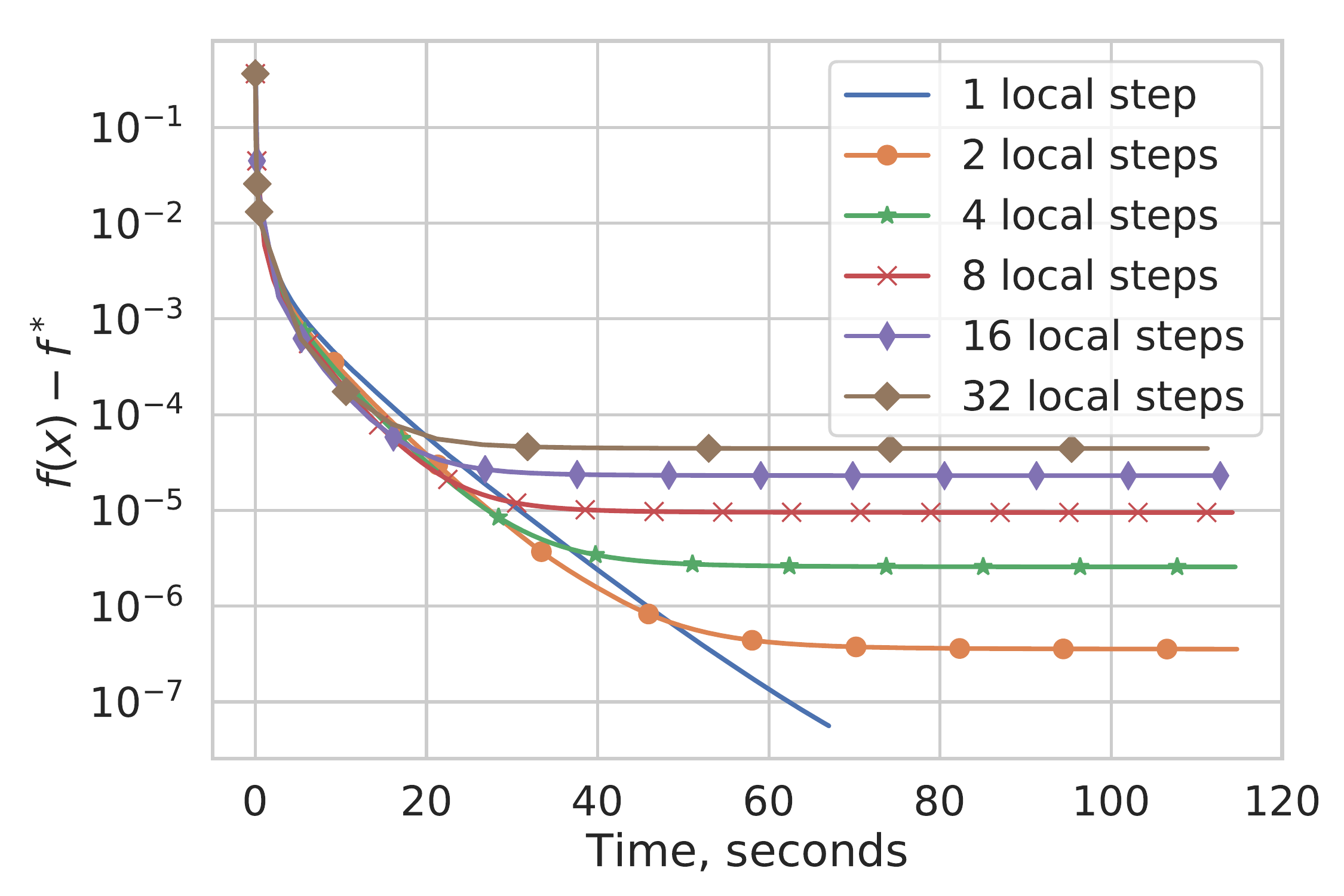}
	\includegraphics[scale=0.2]{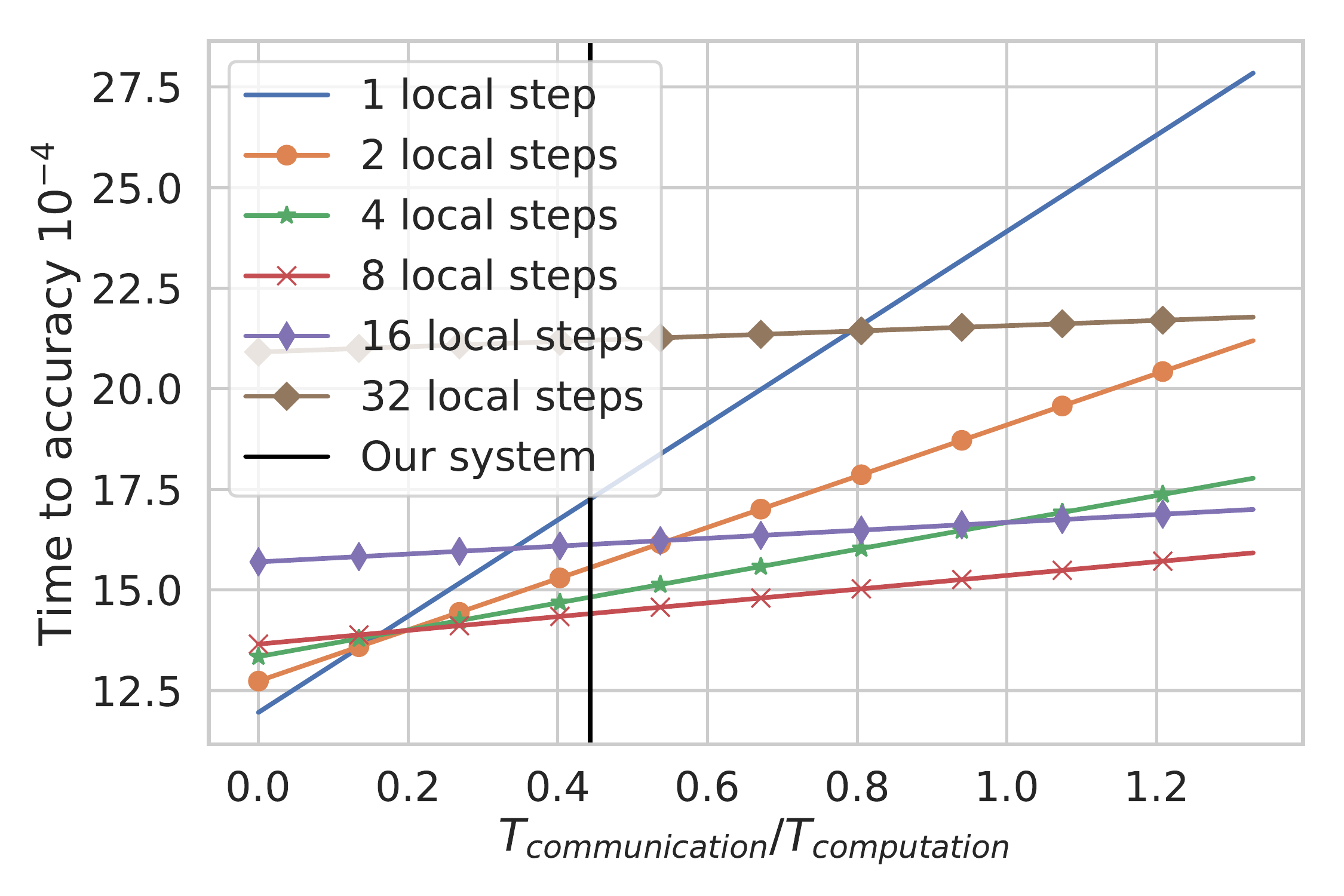}
	\label{fig:a5a_different_H}
	\caption{Convergence of local GD methods with different number of local steps on the 'a5a' dataset. 1 local step corresponds to fully synchronized gradient descent and it is the only method that converges precisely to the optimum. The left plot shows convergence in terms of communication rounds, showing a clear advantage of local GD when only limited accuracy is required. The mid plot, however, illustrates that wall-clock time might improve only slightly and the right plot shows what changes with different communication cost.}
\end{figure}

\begin{figure}[!th]
	\centering
	\includegraphics[scale=0.2]{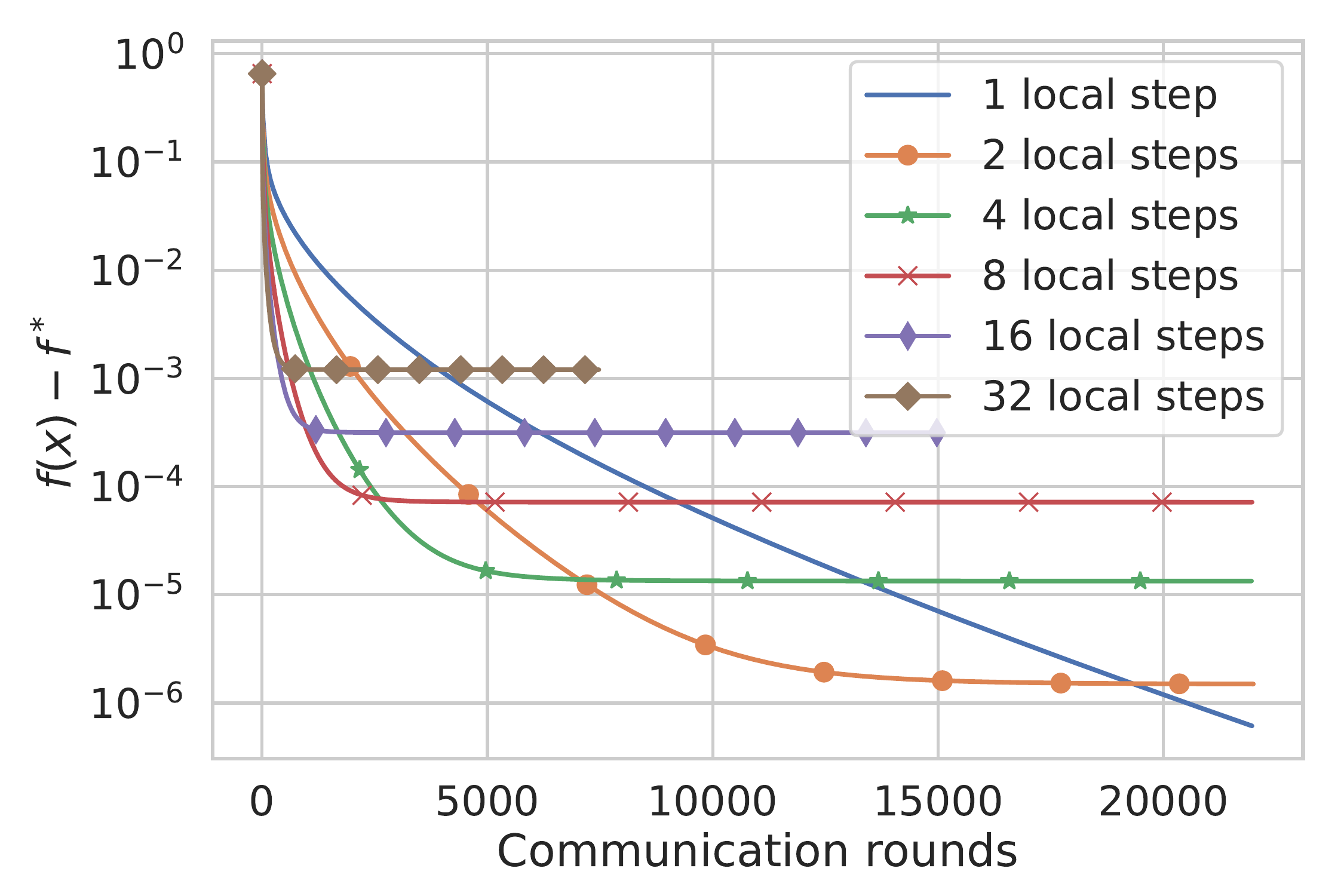}
	\includegraphics[scale=0.2]{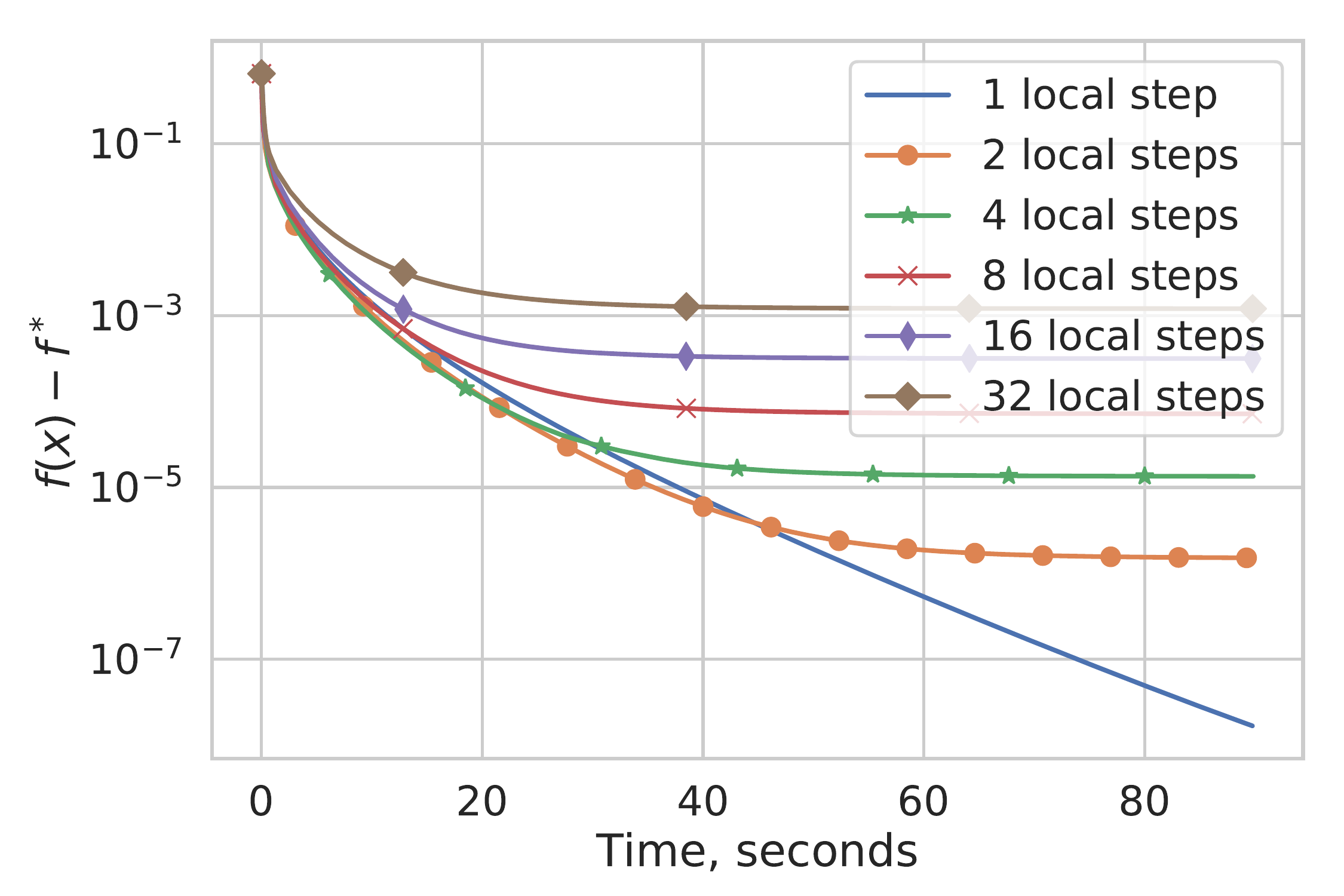}
	\includegraphics[scale=0.2]{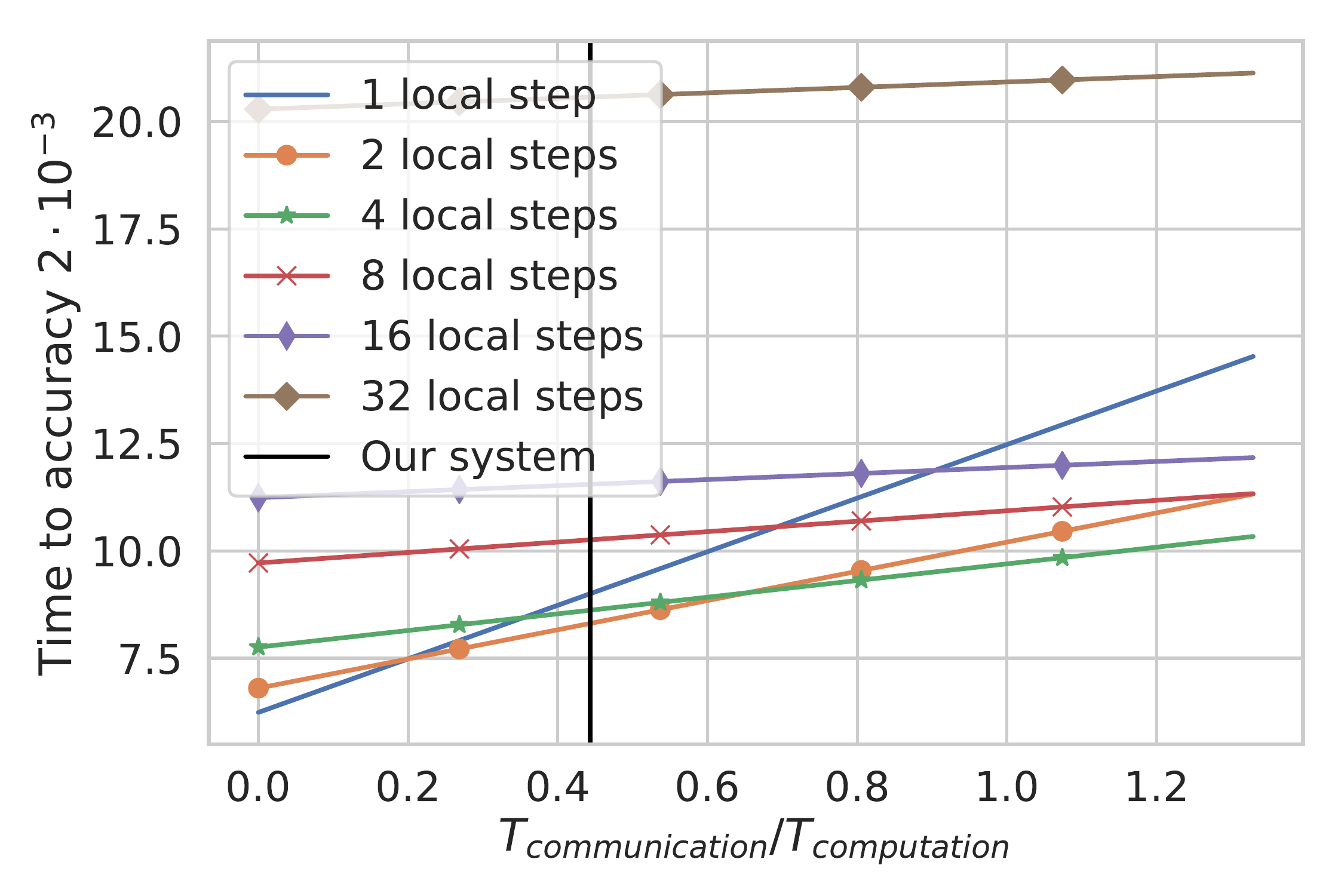}
	\label{fig:mushrooms_different_H}
	\caption{Same experiment as in Figure~\ref{fig:a5a_different_H}, performed on the 'mushrooms' dataset.}
\end{figure}

\clearpage
\bibliography{local_gd}
\clearpage

\part*{Supplementary Material for:\\  First Analysis of Local GD on Heterogeneous Data}

\section{Proofs}

We first provide two technical lemmas which relate the quantities  $\hat{x}_t$, $V_t$ and $g_t$, $x_\ast$ and $\nabla f_m(x_t^m)$ for $m=1,2,\dots,M$. These lemmas are independent of the algorithm.

\begin{lemma}
    \label{lemma:average-gradient-bound}
    If Assumption~\ref{asm:smoothness-and-convexity} holds, then
    \begin{align}
        \label{eq:lma-average-gradient-bound}
        \sqn{g_t} &\leq 2 L^2 V_t + 4 L D_{f} (\hat{x}_t, x_\ast).
    \end{align}
\end{lemma}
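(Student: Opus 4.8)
The plan is to decompose the averaged gradient $g_t$ into a term that measures how far the local iterates $x_t^m$ are from their mean $\hat{x}_t$ — which will be controlled by $V_t$ — plus the full gradient of $f$ evaluated at the mean — which will be controlled by the optimality gap $D_{f}(\hat{x}_t, x_\ast)$. Explicitly, writing $\nabla f(\hat{x}_t) = \avemm \nabla f_m(\hat{x}_t)$,
\[
g_t = \avemm \br{\nabla f_m(x_t^m) - \nabla f_m(\hat{x}_t)} + \nabla f(\hat{x}_t),
\]
so that by the elementary inequality $\sqn{a+b}\le 2\sqn{a} + 2\sqn{b}$,
\[
\sqn{g_t} \le 2\,\sqn{\avemm \br{\nabla f_m(x_t^m) - \nabla f_m(\hat{x}_t)}} + 2\,\sqn{\nabla f(\hat{x}_t)}.
\]

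First I would bound the variance-type term: convexity of $\norm{\cdot}^2$ (Jensen's inequality) moves the average outside the norm, and then the $L$-smoothness half of Assumption~\ref{asm:smoothness-and-convexity}, which gives $\norm{\nabla f_m(x) - \nabla f_m(y)}\le L\norm{x-y}$, yields
\[
\sqn{\avemm \br{\nabla f_m(x_t^m) - \nabla f_m(\hat{x}_t)}} \le \avemm \sqn{\nabla f_m(x_t^m) - \nabla f_m(\hat{x}_t)} \le L^2 \avemm \sqn{x_t^m - \hat{x}_t} = L^2 V_t.
\]
Next I would bound $\sqn{\nabla f(\hat{x}_t)}$. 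Since $f$ is an average of $L$-smooth convex functions it is itself $L$-smooth and convex, and since $x_\ast$ is a minimizer we have $\nabla f(x_\ast)=0$ and $D_{f}(\hat{x}_t,x_\ast) = f(\hat{x}_t)-f(x_\ast)$. The standard inequality $\sqn{\nabla f(x)} \le 2L\br{f(x)-f(x_\ast)}$ — obtained by substituting $y = x - \tfrac{1}{L}\nabla f(x)$ into the quadratic upper bound $f(y)\le f(x)+\ev{\nabla f(x),y-x}+\tfrac{L}{2}\sqn{y-x}$ and using $f(x_\ast)\le f(y)$ — then gives $\sqn{\nabla f(\hat{x}_t)} \le 2L\,D_{f}(\hat{x}_t,x_\ast)$. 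Combining the two estimates yields $\sqn{g_t}\le 2L^2 V_t + 4L\,D_{f}(\hat{x}_t,x_\ast)$, which is exactly~\eqref{eq:lma-average-gradient-bound}.

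I do not anticipate a genuine obstacle: the argument is essentially bookkeeping that isolates the two sources of error the main recursion must track. The only place needing a word of justification is the bound $\sqn{\nabla f(x)}\le 2L(f(x)-f(x_\ast))$, which is a textbook consequence of $L$-smoothness of $f$ together with optimality of $x_\ast$; everything else is Jensen's inequality, the parallelogram-type bound on $\sqn{\cdot}$, and Lipschitzness of each $\nabla f_m$.
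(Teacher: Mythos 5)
Your proof is correct and follows essentially the same route as the paper: the same decomposition of $g_t$ into $g_t - \nabla f(\hat{x}_t)$ plus $\nabla f(\hat{x}_t)$, Jensen plus Lipschitzness of each $\nabla f_m$ for the $2L^2 V_t$ term, and the standard bound $\sqn{\nabla f(\hat{x}_t)} \leq 2L\,D_f(\hat{x}_t, x_\ast)$ for the remaining term. The only cosmetic difference is that you derive the latter from the descent lemma while the paper invokes it as a known consequence of smoothness and convexity; both are equivalent.
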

\begin{proof}
    Starting with the left-hand side,
    \begin{align*}
        \sqn{g_t} &\leq 2 \sqn{g_t - \nabla f(\hat{x}_t)} + 2 \sqn{\nabla f(\hat{x}_t)} \\
        &= 2 \sqn{ \avemm \nabla f_m (x_t^m) - \avemm \nabla f_m (\hat{x}_t) } + 2 \sqn{\nabla f(\hat{x}_t)} \\
        &\leq \frac{2}{M} \sum_{m=1}^{M} \sqn{\nabla f_m (x_t^m) - \nabla f_m (\hat{x}_t)} + 2 \sqn{\nabla f(\hat{x}_t)} \\
        &\leq \frac{2 L^2}{M} \sum_{m=1}^{M} \sqn{x_t^m - \hat{x}_t} + 2 \sqn{\nabla f(\hat{x}_t)},
    \end{align*}
    where in the second inequality we have used convexity of the map $x\mapsto \|x\|^2$. The claim of the lemma follows by noting that
    \begin{align*}
        \sqn{\nabla f(\hat{x}_t)} = \sqn{\nabla f(\hat{x}_t) - \nabla f(x_\ast)} \leq 2 L D_{f} (\hat{x}_t, x_\ast).
    \end{align*}
   
\end{proof}

\begin{lemma}
    \label{lemma:inner-product-bound}
    Suppose that Assumption~\ref{asm:smoothness-and-convexity} holds. Then,
    \begin{equation}
        \label{eq:lma-inner-product-bound}
        -\frac{2}{M} \sum_{m=1}^{M} \ev{\hat{x}_t - x_\ast, \nabla f_m (x_t^m)} \leq - 2  D_{f} (\hat{x}_t, x_\ast) +  L V_t.
    \end{equation}
\end{lemma}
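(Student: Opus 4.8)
The plan is to bound each summand $-\ev{\hat{x}_t - x_\ast, \nabla f_m(x_t^m)}$ separately and then average. The natural move is to insert the local iterate $x_t^m$ as a pivot, writing $\hat{x}_t - x_\ast = (\hat{x}_t - x_t^m) + (x_t^m - x_\ast)$, so that
$$-\ev{\hat{x}_t - x_\ast, \nabla f_m(x_t^m)} = \ev{x_t^m - \hat{x}_t, \nabla f_m(x_t^m)} - \ev{x_t^m - x_\ast, \nabla f_m(x_t^m)}.$$
Each of these two inner products can be controlled by one side of the ``sandwich'' in Assumption~\ref{asm:smoothness-and-convexity}.

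For the second term, convexity of $f_m$ gives $\ev{x_t^m - x_\ast, \nabla f_m(x_t^m)} \ge f_m(x_t^m) - f_m(x_\ast)$, hence $-\ev{x_t^m - x_\ast, \nabla f_m(x_t^m)} \le f_m(x_\ast) - f_m(x_t^m)$. For the first term, I would apply the upper (smoothness) inequality of Assumption~\ref{asm:smoothness-and-convexity} at the points $y = x_t^m$ and $x = \hat{x}_t$, namely $f_m(\hat{x}_t) \le f_m(x_t^m) + \ev{\nabla f_m(x_t^m), \hat{x}_t - x_t^m} + \tfrac{L}{2}\sqn{\hat{x}_t - x_t^m}$, which rearranges to $\ev{x_t^m - \hat{x}_t, \nabla f_m(x_t^m)} \le f_m(x_t^m) - f_m(\hat{x}_t) + \tfrac{L}{2}\sqn{x_t^m - \hat{x}_t}$. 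Adding the two estimates, the $f_m(x_t^m)$ terms cancel and we get $-\ev{\hat{x}_t - x_\ast, \nabla f_m(x_t^m)} \le f_m(x_\ast) - f_m(\hat{x}_t) + \tfrac{L}{2}\sqn{x_t^m - \hat{x}_t}$.

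Finally I would multiply by $\tfrac{2}{M}$ and sum over $m = 1, \dots, M$. The gradient-dependent terms become $-\tfrac{2}{M}\sum_m \ev{\hat{x}_t - x_\ast, \nabla f_m(x_t^m)}$ on the left; on the right, $\tfrac{2}{M}\sum_m (f_m(x_\ast) - f_m(\hat{x}_t)) = -2(f(\hat{x}_t) - f(x_\ast)) = -2 D_f(\hat{x}_t, x_\ast)$ by the definition of $f$ and the observation $D_f(x,x_\ast) = f(x) - f(x_\ast)$, while $\tfrac{L}{M}\sum_m \sqn{x_t^m - \hat{x}_t} = L V_t$ by the definition of $V_t$. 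This yields exactly \eqref{eq:lma-inner-product-bound}. There is no real obstacle here; the only thing to get right is the choice of pivot $x_t^m$ and applying convexity on the $x_t^m$–$x_\ast$ leg but smoothness on the $\hat{x}_t$–$x_t^m$ leg, so that the $f_m(x_t^m)$ values telescope away and only $f(\hat{x}_t)$, $f(x_\ast)$ and the variance term $V_t$ survive.
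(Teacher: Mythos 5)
Your proof is correct and is essentially identical to the paper's: both split $\hat{x}_t - x_\ast$ through the pivot $x_t^m$, apply convexity on the $x_t^m$--$x_\ast$ leg and $L$-smoothness on the $\hat{x}_t$--$x_t^m$ leg so the $f_m(x_t^m)$ terms cancel, then average over $m$. No differences worth noting.
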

\begin{proof}
    Starting with the left-hand side,
    \begin{align}
        \label{eq:lma-inner-prod-proof-1}
        - 2  \ev{\hat{x}_t - x_\ast, \nabla f_m (x_t^m)} &= - 2  \ev{x_t^m - x_\ast, \nabla f_m (x_t^m)} - 2  \ev{\hat{x}_t - x_t^m, \nabla f_m (x_t^m)}.
    \end{align}
    The first term in \eqref{eq:lma-inner-prod-proof-1} is bounded by convexity:
    \begin{align}
        \label{eq:lma-inner-prod-proof-2}
        - \ev{x_t^m - x_\ast, \nabla f_m (x_t^m)} &\leq f_m (x_\ast) - f_m (x_t^m).
    \end{align}
    For the second term, we use $L$-smoothness,
    \begin{align}
        \label{eq:lma-inner-prod-proof-3}
        - \ev{\hat{x}_t  - x_t^m, \nabla f_m (x_t^m)} \leq f_m (x_t^m) - f_m (\hat{x}_t) + \frac{L}{2} \sqn{x_t^m - \hat{x}_t}.
    \end{align}
    Combining \eqref{eq:lma-inner-prod-proof-3} and \eqref{eq:lma-inner-prod-proof-2} in \eqref{eq:lma-inner-prod-proof-1},
    \begin{align*}
        - 2  \ev{\hat{x}_t - x_\ast, \nabla f_m (x_t^m)} &\leq 2  \br{f_m (x_\ast) - f_m (x_t^m)} \\
        &+ 2  \br{f_m (x_t^m) - f_m (\hat{x}_t) + \frac{L}{2} \sqn{x_t^m - \hat{x}_t}} \\
        &= 2  \br{f_m (x_\ast) - f_m (\hat{x}_t) + \frac{L}{2} \sqn{x_t^m - \hat{x}_t}}.
    \end{align*}
    Averaging over $m$,
    \begin{align*}
        -\frac{2 }{M} \sum_{m=1}^{M} \ev{\hat{x}_t - x_\ast, \nabla f_m (x_t^m)} &\leq - 2  \br{f(\hat{x}_t) - f(x_\ast)} + \frac{ L}{M} \sum_{m=1}^{M} \sqn{x_t^m - \hat{x}_t}.
    \end{align*}
    Note that the first term is the Bregman divergence $D_{f} (\hat{x}_t, x_\ast)$ and the second term is just $V_t$, hence
    \begin{align*}
        -\frac{2 }{M} \sum_{m=1}^{M} \ev{\hat{x}_t - x_\ast, \nabla f_m (x_t^m)} &\leq - 2  D_{f} (\hat{x}_t, x_\ast) +  L V_t,
    \end{align*}
    which is the claim of this lemma.
\end{proof}

\begin{proof}[\textbf{Proof of Lemma~\ref{lemma:optimality-gap-recursion}}]
    Note that $\hat{x}_{t+1} = \hat{x}_t - \gamma g_t$ always holds. Then we have,
    \begin{align*}
        \sqn{\hat{x}_{t+1} - x_\ast} &= \sqn{\hat{x}_t - \gamma g_t - x_\ast} \\
        &= \sqn{\hat{x}_t - x_\ast} + \gamma^2 \sqn{g_t} - 2 \gamma \ev{\hat{x}_t - x_\ast, g_t} \\
        &= \sqn{\hat{x}_t - x_\ast} + \gamma^2 \sqn{g_t} - \frac{2 \gamma}{M} \sum_{m=1}^{M} \ev{\hat{x}_t - x_\ast, \nabla f_m (x_t^m)}
    \end{align*}
    Let $r_{t} = \hat{x}_t - x_\ast$, then using Lemmas~\ref{lemma:average-gradient-bound} and \ref{lemma:inner-product-bound},
    \begin{align*}
        \sqn{r_{t+1}} &\leq \sqn{r_t} + \gamma^2 \sqn{g_t} - \frac{2 \gamma}{M} \sum_{m=1}^{M} \ev{\hat{x}_t - x_\ast, \nabla f_m (x_t^m)} \\
        &\overset{\eqref{eq:lma-average-gradient-bound}}{\leq} \sqn{r_t} + \gamma^2 \br{ 2 L^2 V_t + 4 L D_{f} (\hat{x}_t, x_\ast) } - \frac{2 \gamma}{M} \sum_{m=1}^{M} \ev{\hat{x}_t - x_\ast, \nabla f_m (x_t^m)}  \\
        &\overset{\eqref{eq:lma-inner-product-bound}}{\leq} \sqn{r_t} + \gamma L \br{1 + 2 \gamma L} V_t - 2 \gamma \br{1 - 2 \gamma L} D_{f} (\hat{x}_t, x_\ast).
    \end{align*}
    If $\gamma \leq \frac{1}{4L}$, then $1 - 2 \gamma L \geq \frac{1}{2}$ and $1 + 2 \gamma L \leq \frac{3}{2}$, and hence
    \begin{align*}
        \sqn{r_{t+1}} &\leq \sqn{r_t} + \frac{3}{2} \gamma L V_t - \gamma D_{f} (\hat{x}_t, x_\ast),
    \end{align*}
    as claimed.
\end{proof}

\begin{proof}[\textbf{Proof of Lemma~\ref{lemma:Vt-bound}}]
    Let $g_t^m = \nabla f(x_t^m)$, then noting that $x_{t+1}^m = x_t^m - \gamma g_t^m$ when $t_{p+1} > t > t_p$ and $x_{t_p}^m = \hat{x}_{t_p}$ we have, 
    \begin{align*}
        V_{t} &= \avemm \sqn{x_t^m - \hat{x}_t} = \avemm \sqn{ x_{t_p}^{m} - \hat{x}_{t_p} - \gamma \sum_{i=t_p}^{t-1} g_i^m - g_i } \\
        &= \frac{\gamma^2}{M} \sum_{m=1}^{M} \sqn{\sum_{i=t_p}^{t-1} \br{g_i^m - g_i}} \\
        &\leq \frac{\gamma^2}{M} \sum_{m=1}^{M} \br{t - t_p} \sum_{i=t_p}^{t-1} \sqn{g_i^m - g_i} \\
        &\leq \frac{\gamma^2 H}{M} \sum_{m=1}^{M} \sum_{i=t_p}^{t-1} \sqn{g_i^m - g_i}  \\
        &\leq \frac{\gamma^2 H}{M} \sum_{m=1}^{M} \sum_{i=t_p}^{t-1} \sqn{g_i^m} .
    \end{align*}
    Then we have 
    \begin{align*}
        \sqn{g_i^m} &\leq \br{1 + \alpha} \sqn{g_i^m - \nabla f_m (\hat{x}_i)} + \br{1 + \alpha^{-1}} \sqn{\nabla f_m (\hat{x}_i)} \\
        &\leq \br{1 + \alpha}\sqn{g_i^m - \nabla f_m (\hat{x}_i)} + \br{1 + \alpha^{-1}} \br{1 + \beta} \sqn{\nabla f_m (\hat{x}_i) - \nabla f_m (x_\ast)} \\
        &+ \br{1 + \alpha^{-1}} \br{1 + \beta^{-1}} \sqn{\nabla f_m (x_\ast)}.
    \end{align*}
    Putting $\alpha = 2$, $\beta = \frac{1}{3}$, we get
    \begin{align*}
        \sqn{g_i^m} &\leq 3 \sqn{g_i^m - \nabla f_m (\hat{x}_i)} + 2 \sqn{\nabla f_m (\hat{x}_i) - \nabla f_m (x_\ast)} + 6 \sqn{\nabla f_m (x_\ast)} \\
        &\leq 3 L^2 \sqn{x_t^m - \hat{x}_t} + 2 \br{2 L D_{f_m} (\hat{x}_t, x_\ast)} + 6 \sqn{\nabla f_m (x_\ast)}.
    \end{align*}
    Averaging with respect to $m$,
    \begin{align*}
        \avemm \sqn{g_t^m} &\leq 3 L^2 V_t + 4 L D_{f} (\hat{x}_t, x_\ast) + 6 \sigma^2.
    \end{align*}
    Hence we have,
    \begin{align*}
        V_t &\leq \gamma^2 H \sum_{i=t_p}^{t} \frac{1}{M} \sum_{m=1}^{M} \sqn{g_t^m} \leq \gamma^2 H \sum_{i=t_p}^{t} \br{3 L^2 V_i + 4 L D_{f} (\hat{x}_i, x_\ast) + 6 \sigma^2 }.
    \end{align*}
    Summing up the above inequality as $t$ varies from $t_p$ to $v = t_{p+1} - 1$,
    \begin{align*}
        \sum_{t=t_p}^{v} V_t &\leq \gamma^2 H \sum_{t=t_p}^{v} \sum_{i=t_p}^{t} \br{3 L^2 V_i + 4 L D_{f} (\hat{x}_i, x_\ast) + 6 \sigma^2 } \\
        &\leq \gamma^2 H \sum_{t=t_p}^{v} \sum_{i=t_p}^{v} \br{3 L^2 V_i + 4 L D_{f} (\hat{x}_i, x_\ast) + 6 \sigma^2 } \\
        &= 3 L^2 \gamma^2 H^2 \sum_{i=t_p}^{v} V_i + 4 L \gamma^2 H^2 \sum_{i=t_p}^{v} D_{f} (\hat{x}_i, x_\ast) + \sum_{i=t_p}^{v} 6 \gamma^2 H^2 \sigma^2.
    \end{align*}
    Noting that the sum $\sum_{t=t_p}^{v} V_t$ appears on both sides, we have
    \begin{align*}
        \br{1 - 3 L^2 \gamma^2 H^2} \sum_{t=t_p}^{v} V_t &\leq 4 L \gamma^2 H^2 \sum_{i=t_p}^{v} D_{f} (\hat{x}_i, x_\ast) + 6 \gamma^2 H^2 \sigma^2.
    \end{align*}
    Note that because $\gamma \leq \frac{1}{4 L H} \leq \frac{1}{\sqrt{15} L H}$, then our choice of $\gamma$ implies that $1 - 3 L^2 \gamma^2 H^2 \geq \frac{4}{5}$, hence
    \begin{align*}
        \sum_{t=t_p}^{v} V_t \leq 5 L \gamma^2 H^2 \sum_{i=t_p}^{v} D_{f} (\hat{x}_i, x_\ast) + \sum_{i=t_p}^{v} \frac{15}{2} \gamma^2 H^2 \sigma^2.
    \end{align*}
    For the second part, we have
    \begin{align*}
        \sum_{t=t_p}^{v} \frac{3}{2} L V_t - \sum_{i=t_p}^{v} D_{f} (\hat{x}_i, x_\ast) \leq \br{\frac{15}{2} L^2 \gamma^2 H^2 - 1} \sum_{i=t_p}^{v} D_{f} (\hat{x}_i, x_\ast) + \sum_{i=t_p}^{v} \frac{45}{4} L \gamma^2 H^2 \sigma^2.
    \end{align*}
    Using that $\gamma \leq \frac{1}{4 L H}$ we get that $\frac{15}{2} L^2 \gamma^2 H^2 - 1 \leq \frac{-1}{2}$, and using this we get the second claim.
\end{proof}

\begin{proof}[\textbf{Proof of Theorem~\ref{theorem:local-gd-weak-convexity}}]
    Starting with Lemma~\ref{lemma:optimality-gap-recursion}, we have
    \begin{align*}
        \sqn{r_{t+1}} &\leq \sqn{r_t} + \gamma \br{\frac{3}{2} L V_t - D_{f} (\hat{x}_t, x_\ast)} \leq \sqn{r_t} + \gamma \br{ 2 L V_t - D_{f} (\hat{x}_t, x_\ast) }.
    \end{align*}
    Summing up these inequalities gives
    \begin{align*}
        \sum_{i=1}^{T} \sqn{r_t} &\leq \sum_{i=0}^{T-1} \sqn{r_t} + \gamma \sum_{i=0}^{T-1} \br{2 L V_i - D_{f} (\hat{x}_i, x_\ast)},
    \end{align*}
and     using that $T = t_p$ for some $p \in \N$, we can decompose the second term by double counting and bound it by Lemma~\ref{lemma:Vt-bound}, then using double counting again,
    \begin{align*}
        \gamma \sum_{i=0}^{T-1} \br{2 L V_i - D_{f} (\hat{x}_i, x_\ast)} &= \gamma \sum_{k=1}^{p} \sum_{i=t_{k-1}}^{t_{k} - 1} \br{2 L V_i - D_{f} (\hat{x}_i, x_\ast)} \\
        &\leq -\frac{\gamma}{2} \sum_{k=1}^{p} \sum_{i=t_p}^{v} D_{f} (\hat{x}_i, x_\ast) + \sum_{k=1}^{p} \sum_{i=t_p}^{v} 12 L \gamma^3 H^2 \sigma^2 \\
        &= -\frac{\gamma}{2} \sum_{i=0}^{T-1} D_{f} (\hat{x}_i, x_\ast) + \sum_{i=0}^{T-1} 12 L \gamma^3 H^2 \sigma^2.
    \end{align*}
    Using this in the previous bound,
    \begin{align*}
        \sum_{i=1}^{T} \sqn{r_t} &\leq \sum_{i=0}^{T-1} \sqn{r_t} - \frac{\gamma}{2} \sum_{i=0}^{T-1} D_{f} (\hat{x}_i, x_\ast) + \sum_{i=0}^{T-1} 12 L \gamma^3 H^2 \sigma^2.
    \end{align*}
    Rearranging we get,
    \begin{align*}
        \frac{\gamma}{2} \sum_{i=0}^{T-1} D_f (\hat{x}_i, x_\ast) &\leq \sum_{i=0}^{T-1} \sqn{r_t} - \sum_{i=1}^{T} \sqn{r_t} + \sum_{i=0}^{T-1} 12 L \gamma^3 H^2 \sigma^2 \\
        &= \sqn{r_0} - \sqn{r_T} + \sum_{i=0}^{T-1} 12 L \gamma^3 H^2 \sigma^2 \\
        &\leq \sqn{r_0} + 12 T L \gamma^3 H^2 \sigma^2.
    \end{align*}
    Dividing both sides by $\gamma T / 2$ we get,
    \begin{align*}
        \frac{1}{T} \sum_{i=0}^{T-1} D_f (\hat{x}_i, x_\ast) &\leq \frac{2 r_0}{\gamma T} + 24 L \gamma^2 H^2 \sigma^2.
    \end{align*}
    Finally, using Jensen's inequality and the convexity of $f$ we get the required claim.
\end{proof}

\end{document}